\def\BibTeX{{\rm B\kern-.05em{\sc i\kern-.025em b}\kern-.08em
    T\kern-.1667em\lower.7ex\hbox{E}\kern-.125emX}}
\newtheorem{theorem}{Theorem}
\newtheorem{proposition}{Proposition}
\newtheorem{lemma}{Lemma}
\newcommand{\maxstarempty}{\mathrm{max}^*}
\begin{document}

\title{Approximation Capabilities of Neural Networks using Morphological Perceptrons and Generalizations\\
\thanks{This work is supported in part by the National Science Foundation (CCF-1763747).}
}

\author{\IEEEauthorblockN{William Chang, Hassan Hamad, Keith M.~Chugg}
\IEEEauthorblockA{\textit{Ming Hsieh Department of Electrical and Computer Engineering} \\
\textit{University of Southern California}\\
Los Angeles, United States of America \\
\{chan087, hhamad, chugg\}@usc.edu}
}

\maketitle

\begin{abstract}
Standard artificial neural networks (ANNs) use sum-product or multiply-accumulate node operations with a memoryless nonlinear activation.  These neural networks are known to have universal function approximation capabilities.  Previously proposed morphological perceptrons use max-sum, in place of sum-product, node processing and have promising properties for circuit implementations.  In this paper we show that these max-sum ANNs do not have universal approximation capabilities.  Furthermore, we consider proposed signed-max-sum and max-star-sum generalizations of morphological ANNs and show that these variants also do not have universal approximation capabilities.  We contrast these variations to log-number system (LNS) implementations which also avoid multiplications, but do exhibit universal approximation capabilities. 
\end{abstract}

\begin{IEEEkeywords}
Neural networks, morpholgical perceptrons, log number system
\end{IEEEkeywords}

%\section*{List of Acronyms}
\begin{acronym}
\acro{ANN}{Artificial Neural Network}
\acro{ANNs}{Artificial Neural Networks}
\acro{LNS}{Logarithmic Number System}
\acro{SNN}{Spiking Neural Network}
\acro{MAC}{Multiply-Accumulate}
\end{acronym}

\section{Introduction}

Deep neural networks are driving the AI revolution. They have led to breakthroughs in various fields ranging from computer vision \cite{krizhevsky-2012-alexnet} and natural language processing \cite{devlin-etal-2019-bert} to protein folding \cite{alphafold-2020} and autonomous driving \cite{nvidia-2016-autonomous}. The current trend is toward larger models. This has motivated recent work on complexity reductions for both inference and training. Complexity reduction techniques such as pruning \cite{han-2016-pruning}, sparsity \cite{kundu-2020-sparsity}, and quantization \cite{raghuraman-2018-quantization} have been proposed. This is particularly important for models deployed on edge devices that have limited memory and computational resources.

\begin{table}[t]
    \centering
    \caption{Different node operations studied in this paper}
    \label{tab:node_def}
    \begin{tabularx}{0.35\textwidth}{c|X}
        Node Operation & Sum-Product Equivalent \\[2ex]
        \hline
         sum-product &  \(\displaystyle z = \sum_{i} (x_i y_i)\)\\[3ex]
         
         $\max$-sum & \(\displaystyle z = \bigvee_{i} (x_i + y_i)\)\\[3ex]
         
         signed $\max$-sum & \(\displaystyle z = \bigvee_{i} a_i (x_i + y_i)\)\\[3ex]
         
         $\maxstarempty$-sum & \(\displaystyle z = \underset{i}{\maxstarempty}(x_i + y_i)\)\\[3ex]
         
         LNS & \begin{tabular}[c]{@{}c@{}} \(\displaystyle z = \underset{i}{\maxstarempty_{\pm}}(\log|x_i| + \log|y_i|) \) \\ 
         \(\displaystyle  s_z = s_x \oplus s_y\) \end{tabular}   
    \end{tabularx}
\end{table}

Another approach for complexity reduction is to depart from the \ac{MAC} (or sum-product) processing used in standard \ac{ANNs}. Radically different network structures, such as the Spiking Neural Network (SNN) \cite{roy-2019-snn,cao-2015-snn}, have been proposed. Others have proposed to simply replace sum-product operations with a different, and ideally more efficient, operations \cite{charisopoulos2017morphological,sussner1998signedmaxsum,ritter1996introduction,miyashita-2016-lns,sanyal-2020-lns}. In table~\ref{tab:node_def}, we list the types of node operations that will be the discussed in this paper, where $\vee$ denotes the $\max$ operator. The morphological perceptron replaces the sum-product by a $\max$-sum operation \cite{ritter1996introduction,charisopoulos2017morphological}. The use of the $\max$ function inherently adds non-linearity to the network. The morphological perceptron was extended to use a signed $\max$-sum operation by adding a binary sign parameter in  \cite{sussner1998signedmaxsum}.  In the field of digital communications, and specifically in error correction coding literature, the $\max^*$-sum operation\footnote{$\max^*$-sum is also known as the Jacobian Logarithm \cite{RoViHo95}} is widely used in decoding iterative digital codes \cite{RoViHo95}. This operation can be seen as a natural extension to the $\max$-sum node. Finally,  is well known that the exact equivalent of a sum-product can be implemented in the log domain, i.e. using the \ac{LNS}. \ac{LNS} requires the use of $\max^*_+$-sum and $\max^*_-$-sum processing along with tracking the signs of the linear operands.

A neural network using \ac{LNS} can thus implement the same processing as a standard \ac{ANN}\cite{miyashita-2016-lns,sanyal-2020-lns}. Therefore, \ac{LNS}-based networks inherit the approximation capabilities of standard \ac{ANN}s. Specifically, a single layer network in \ac{LNS} with a non-linear activation is also a universal function approximator\cite{hornik1990universal,hornik1991approximation}. To the best of our knowledge, the approximation capabilities of neural networks with these max-like units have not yet been studied. In this paper, our goal is to characterize the approximation capabilities of neural networks having a $\max$-sum, signed $\max$-sum or a $\max^*$-sum nodes. We prove that these kind of networks are not universal approximators. In addition, we characterize the exact set of functions that they can approximate.

The paper is organized as follows. In section 2, we define our notation and review the universal approximation theorem of neural networks. Section 3 contains the main results of the paper. We conclude in section 4. The proofs of all the theorems are given in the appendices.

\begin{figure*}[tb]
    \centering
    \includegraphics[width=0.8\textwidth]{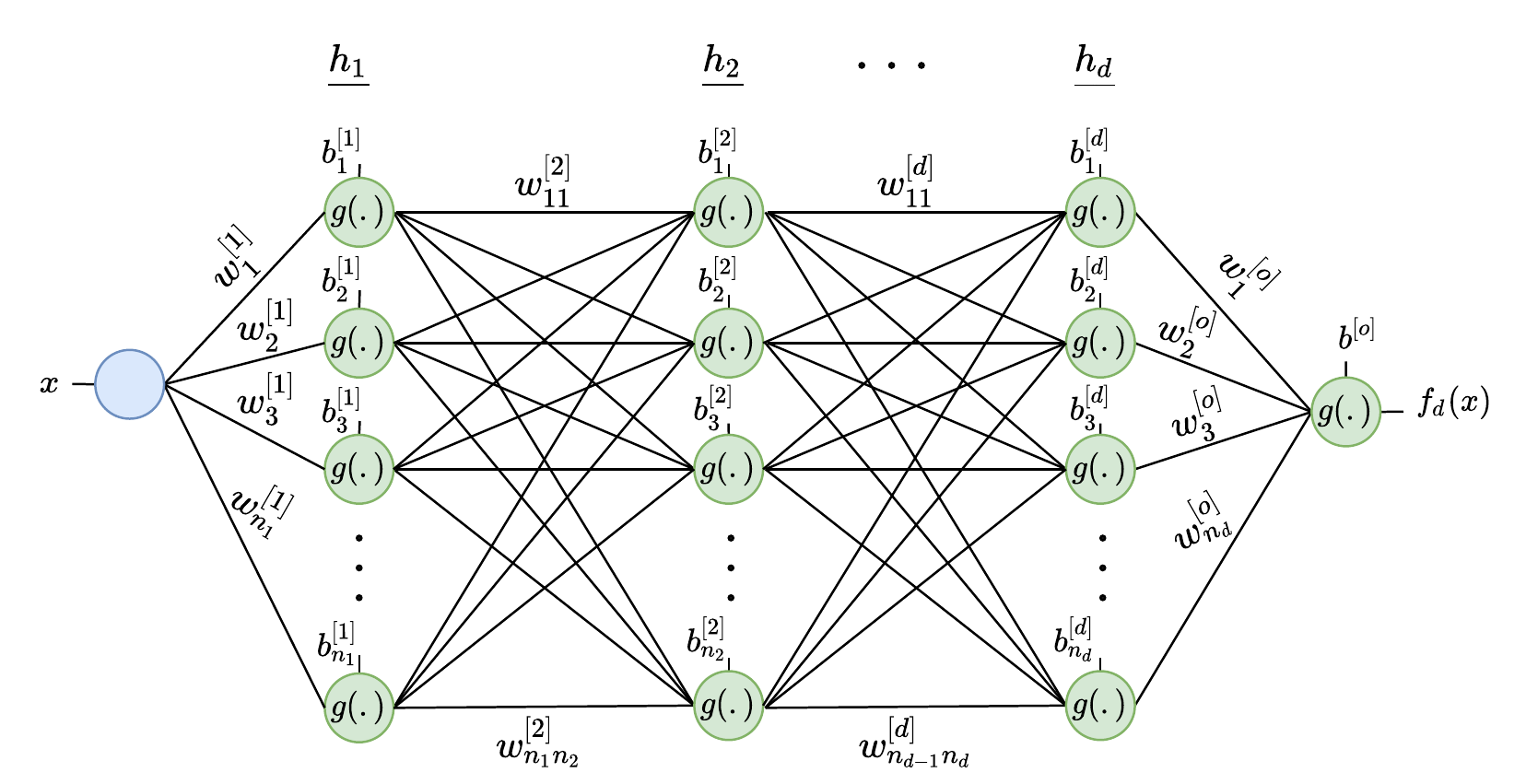}
    \caption{A $d$ hidden layer neural network with scalar input $x$ and scalar output $f_d(x)$. The node processing at each unit is donated by the function $g(.)$. Note that $g(.)$ also includes the activation function, when applicable.}
    \label{fig:network}
\end{figure*}

\section{Background}
Consider a scalar-input and scalar-output fully connected neural network with $d$ hidden layers as shown in Fig.~\ref{fig:network}. Denote the node function at each unit by the $g(.)$. Note that we also lump the activation function, if any, in the definition of $g(.)$. The standard   \ac{MAC}-based node processing $g(.)$ with input  activation vector $\bm{x} = [x_1,x_2,..,x_n]$, weight vector $\bm{w} = [w_1,w_2,..,w_n]$, bias value $b$, and a non-linear activation function $\sigma(.)$, is defined as $g(\bm{x},\bm{w},b) = \sigma(b + \sum_{i=1}^n w_ix_i$).

A well known fact of neural networks is that they are universal approximators. In \cite{hornik1991approximation}, it was shown that a one-hidden layer neural network $f: \mathbb{R}^k \to \mathbb{R}^l$ with a bounded and nonconstant activation function can uniformly approximate any function $h$ on $\mathbb{R}^k$, given that sufficient hidden units are available. This result has been later extended to networks with unbounded activations, such as the rectified linear activation (Relu) \cite{yarotsky2017error}.

A class of neural networks $\mathcal{N}:[0, 1]^n \to \mathbb{R}$ is said to  uniformly  approximate a given function $h: [0, 1]^n \to \mathbb{R}$  iff for any $\epsilon > 0$, there exists a function $N \in \mathcal{N}$ such that $\forall x\in [0, 1]^n: |N(x) - h(x)| < \epsilon$. This class of neural networks is said to be a universal approximator iff it uniformly approximates all continuous functions $h: [0, 1]^n \to \mathbb{R}$.

The universal approximation theorems mentioned above concern neural networks with the standard sum-product or \ac{MAC} processing. In this paper, the approximation capabilities of neural networks with three different non-standard node operations are investigated. 

Our proof method is to show neural networks with the non-\ac{MAC} nodes exhibit bounded first derivatives. By the following lemma, this implies that these networks are not universal approximators. 
\begin{lemma}\label{lemma:helper}
Given a single-input and single-output neural network $f: [c, d] \rightarrow \mathbb{R}$. Suppose that $a \leq f'(x) \leq b$ for $a, b \in \mathbb{R}$. Then $f$ can only uniformly approximate functions $h: [0, 1] \rightarrow \mathbb{R}$ such that $a \leq h'(x) \leq b$ almost everywhere.
\end{lemma}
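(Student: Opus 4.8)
The plan is to prove the contrapositive-flavored statement: if $f$ has derivative bounded in $[a,b]$ everywhere on $[c,d]$, then any function $h$ it uniformly approximates must itself have derivative in $[a,b]$ almost everywhere. I would first observe that the bound $a \le f'(x) \le b$ controls the difference quotients of $f$ uniformly: for any $x_1 < x_2$ in the domain, the mean value theorem gives
\begin{equation}
a \le \frac{f(x_2) - f(x_1)}{x_2 - x_1} \le b.
\end{equation}
This is the structural fact I want to transfer from $f$ to $h$.

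Next I would use uniform approximation to pass this difference-quotient bound to $h$. Fix $\epsilon > 0$ and choose $f$ with $|f(x) - h(x)| < \epsilon$ for all $x$. For fixed $x_1 < x_2$, the approximation error perturbs the difference quotient of $h$ by at most $2\epsilon/(x_2 - x_1)$, so
\begin{equation}
a - \frac{2\epsilon}{x_2 - x_1} \le \frac{h(x_2) - h(x_1)}{x_2 - x_1} \le b + \frac{2\epsilon}{x_2 - x_1}.
\end{equation}
Since $\epsilon$ is arbitrary and $x_2 - x_1$ is fixed, letting $\epsilon \to 0$ yields $a \le \frac{h(x_2)-h(x_1)}{x_2-x_1} \le b$ for every pair $x_1 < x_2$. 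Thus $h$ inherits the same global difference-quotient bound, i.e. $h$ is Lipschitz (hence absolutely continuous) with all secant slopes trapped in $[a,b]$.

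Finally I would convert the difference-quotient bound into the pointwise derivative bound almost everywhere. Because $h$ is Lipschitz, Rademacher's theorem (or the one-dimensional fact that Lipschitz functions are differentiable almost everywhere) guarantees $h'(x)$ exists for almost every $x$. At each such point of differentiability, $h'(x)$ is a limit of difference quotients, each of which lies in $[a,b]$; since $[a,b]$ is closed, the limit lies in $[a,b]$ as well, giving $a \le h'(x) \le b$ almost everywhere.

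The main subtlety, and the step I would be most careful about, is the logical quantifier order when passing from $f$ to $h$: the same $\epsilon$-close $f$ need not work for all pairs $(x_1,x_2)$ simultaneously with a uniform derivative bound unless I hold the pair fixed before sending $\epsilon \to 0$. Fixing $(x_1, x_2)$ first makes the $2\epsilon/(x_2-x_1)$ term vanish cleanly, so the argument goes through; the only real analytic content beyond this is the appeal to almost-everywhere differentiability of Lipschitz functions, which I would cite rather than reprove. A minor gap to flag is that the lemma as stated writes $h:[0,1]\to\mathbb{R}$ while $f$ is defined on $[c,d]$, so I would implicitly assume the domains agree (or that $[0,1]\subseteq[c,d]$) and apply the argument on the common interval.
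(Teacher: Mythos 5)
Your proof is correct, but it takes a genuinely different route from the paper's. The paper argues by contradiction: assuming $h' > b$ on a set of positive measure, it reduces ``without loss of generality'' to the case where that set is a closed interval, writes $h(d) = h(c) + b(d-c) + \delta$ with $\delta > 0$, bounds $N(d) < h(c) + \epsilon + b(d-c)$ via $N(c) < h(c) + \epsilon$ and $N' \le b$, and obtains a contradiction for $\epsilon < \delta/2$. Your argument is direct: you transfer the secant-slope bound from $f$ to $h$ for each fixed pair $(x_1, x_2)$ by sending $\epsilon \to 0$, conclude that $h$ is Lipschitz with every difference quotient in $[a,b]$, and then invoke a.e.\ differentiability of Lipschitz functions. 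The underlying computation --- uniform $\epsilon$-closeness perturbs a secant slope by at most $2\epsilon/(x_2 - x_1)$ --- is the same in both proofs; what differs is how it is deployed, and your deployment buys two things. First, it sidesteps the paper's questionable reduction: a positive-measure set on which $h' > b$ need not contain any interval (consider $h' > b$ on a fat Cantor set), so the paper's ``WLOG $I$ is a closed interval'' step is a genuine gap that your pairwise formulation simply never encounters. Second, your conclusion is stronger: you show any uniformly approximable $h$ must itself be Lipschitz, hence a.e.\ differentiable, whereas the paper implicitly assumes this regularity of $h$ from the outset. The price is the citation of Rademacher/Lebesgue differentiability, which the paper avoids. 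One caveat applies equally to both proofs: the networks at issue (max-sum, signed max-sum) are piecewise linear, so $f$ is not differentiable at kink points and the mean value theorem does not literally apply across them; since $f$ is continuous and piecewise differentiable, the secant bound $a \le \bigl(f(x_2) - f(x_1)\bigr)/(x_2 - x_1) \le b$ still follows by telescoping over the finitely many kinks (equivalently from $f(x_2) - f(x_1) = \int_{x_1}^{x_2} f'(t)\,dt$), so your MVT step is easily repaired and is no less rigorous than the paper's corresponding inference from $N'(x) \le b$.
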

While this result appears intuitive, the same does not hold for higher order derivatives. That is, if $f^{(n)}(x) \in [a, b]$ for some $n \geq 2$, this does not mean that $f$ can only universally approximate functions $h$ such that $a \leq h^{(n)}(x) \leq b$. A simple counter example is a sum-product network with Relu activation on the hidden layers, which is known to be a universal approximator \cite{yarotsky2017error} but has a bounded second derivative.  In Appendix \ref{prop:counterex}, we show that the second derivative of such a network is zero a.e. % Intuitively, this result follows from the fact that the second derivative of a Relu function is zero.

\section{Approximation Capabilities}
In this section we consider the approximation capabilities of neural networks with max-sum processing in place of standard sum-product processing.  We also consider two generalizations to max-sum processing. We show that these non-\ac{MAC} networks are not universal approximators from $\mathbb{R} \to \mathbb{R}$, and thus they are also not universal approximators for $\mathbb{R}^n \to \mathbb{R}$ for any $n \geq 1$. 

We state the results of each case in this section and provide the proofs in appendices.  
\subsection{Max-Sum Network}
In the case of a \emph{max-sum} network, the node function $g(.)$ in Fig.~\ref{fig:network} is defined as 
\begin{align}
    g(\bm{x},\bm{w},b) &= b \vee (w_1+x_1) \vee \cdots \vee (w_n+x_n) \nonumber \\ 
    &= b \vee \left(\bigvee_{i=1}^n (w_i + x_i)\right)
\end{align}
where $\vee$ is the $\max$ operator, i.e. $ x \vee y = \max(x,y)$.  The max-sum node is also known as the morphological perceptron~\cite{charisopoulos2017morphological}. Note that the \emph{max} operation inherently adds non-linearity to the node and no explicit activation function is used. Next we present our first theorem.

   \begin{theorem}\label{thm:max-sum}
Consider a single-input single-output, $d$ hidden layer neural network with \emph{max-sum} node processing $f_d(x): \mathbb{R}\rightarrow \mathbb{R}$. 

Then $f_d(x) = \max(w, w'+ x)$ for some constants $w, w' \in \mathbb{R}$. 
\end{theorem}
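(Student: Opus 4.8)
The plan is to show that every node output in the network, viewed as a function of the scalar input $x$, is a continuous, convex, non-decreasing, piecewise-linear function whose linear pieces all have slope either $0$ or $1$, and then to argue that the only such functions compatible with the network's behaviour as $x \to \pm\infty$ have exactly one breakpoint. I would prove this structural claim by induction on the layer index. For the base case, a first-layer node sees only the scalar input, so it computes $\max(b, w + x)$, which is the maximum of two affine functions and is therefore convex and piecewise-linear with slopes in $\{0,1\}$. For the inductive step, suppose every node in layer $k$ computes such a function $y_j(x)$. A node in layer $k+1$ computes $\max\bigl(b, \bigvee_j (w_j + y_j(x))\bigr)$. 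Each shift $w_j + y_j$ is again convex and piecewise-linear with the same slopes as $y_j$, hence slopes in $\{0,1\}$; the pointwise maximum of finitely many convex functions is convex, and locally the maximum agrees with one of its arguments, so its slope is one of the argument slopes and therefore still lies in $\{0,1\}$. This closes the induction and in particular applies to $f_d$.

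Next I would pin down the behaviour at the two ends of the real line, again by the same induction. Each first-layer node $\max(b, w+x)$ has slope $0$ for $x$ sufficiently negative and slope $1$ for $x$ sufficiently positive. Propagating this through the shifts and maxima shows that every node output, and hence $f_d$, has slope $0$ as $x \to -\infty$ and slope $1$ as $x \to +\infty$. This step is what rules out the two degenerate possibilities of a constant function or a globally affine function $c + x$, neither of which can actually be written in the target form $\max(w, w'+x)$ for finite constants.

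With these facts in hand the conclusion is short. Convexity forces the piecewise-constant slope of $f_d$ to be non-decreasing in $x$; since its only possible values are $0$ and $1$, and since the slope equals $0$ near $-\infty$ and $1$ near $+\infty$, the slope must transition from $0$ to $1$ exactly once, at a single breakpoint $x_0$. A convex piecewise-linear function that is constant (value $c$) up to $x_0$ and then rises with unit slope is precisely $\max\bigl(c, (c - x_0) + x\bigr)$, giving the claimed form with $w = c$ and $w' = c - x_0$. I expect the main obstacle to be the inductive step for the slope set: one must argue carefully that taking maxima cannot create a slope outside $\{0,1\}$ and that convexity is preserved under the composition of shifts and maxima, since it is precisely these two invariants, together with the boundary slopes, that collapse an a priori complicated depth-$d$ piecewise-linear function down to a single kink.
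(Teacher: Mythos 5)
Your proposal is correct, but it reaches the result by a genuinely different route than the paper. The paper proves a purely algebraic closure lemma (its Lemma~\ref{lemma:reduction}): via explicit case analysis on the parameters, $(a_1+(b_1\vee x))\vee(a_2+(b_2\vee x))$ collapses to $w_0\vee(w_1+x)$, and an induction on depth then shows every node output has exactly the form $w\vee(w'+x)$, with $w,w'$ constructed explicitly from the layer parameters. You instead propagate a geometric invariant --- convex, piecewise-linear, slopes in $\{0,1\}$, slope $0$ near $-\infty$ and slope $1$ near $+\infty$ --- and let convexity (monotone slope) collapse the function to a single breakpoint at the end. Your version avoids the paper's case analysis entirely and is arguably more transparent about \emph{why} the class is so small; it also makes explicit a point the paper leaves implicit in its formulas, namely that the degenerate constant and globally affine cases cannot occur because every first-layer node forces unit slope at $+\infty$ and zero slope at $-\infty$. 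What the paper's approach buys in exchange is twofold: it is constructive (the proof yields $w_0,w_1$ as explicit functions of $a_1,b_1,b_2,c$), and, more importantly, its method of closing a small parametric function class under the node operation transfers directly to the signed max-sum setting of Theorem~\ref{thm:signed-max-sum} (via the class $\mathcal{F}$ in Lemma~\ref{lemma:reduction_modified}), whereas your convexity invariant breaks there: with signs $a_i=-1$ a node takes maxima of reflected (concave) pieces, so convexity is not preserved, and only the weaker slope-set part of your invariant would survive. So your argument is a clean, correct alternative for this theorem specifically, while the paper's closure-lemma technique is the one that scales to its later results.
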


Thus, max-sum node processing results in a very limited class of functions that can be realized. In particular, by lemma~\ref{lemma:helper}, max-sum networks are not universal approximators. In \cite{charisopoulos2017morphological}, max-sum layers were combined with standard sum-product layers to obtain effective classifiers.  

\subsection{Signed Max-Sum Network}

In the case of a \emph{signed max-sum} network, the node function $g(.)$ in Fig.~\ref{fig:network} is defined as 
\begin{align}
    g(\bm{x},\bm{w},b) &= b \vee a_1(w_1 + x_1) \vee \cdots \vee a_n(w_n + x_n) \nonumber \\ 
    &= b \vee \left(\bigvee_{i=1}^n a_i(w_i + x_i)\right)
\end{align}
where $a_i \in \{-1, 1\}$.  Note that the $a_i$'s can either be learnable network parameters or fixed. For our purposes, this choice is irrelevant to the study of the approximation capability of the network since we consider all $a_i \in \{-1, 1\}$. 

 \begin{theorem}\label{thm:signed-max-sum}
Consider a single-input single-output, $d$ hidden layer neural network with \emph{signed max-sum} node processing $f_d(x): \mathbb{R}\rightarrow \mathbb{R}$. 

Then $f'_d(x) \in \{-1, 0, 1\}$ a.e. 
\end{theorem}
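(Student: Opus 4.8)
The plan is to prove a stronger structural statement by induction on the depth $d$: the output of \emph{every} node in the network, viewed as a function of the scalar input $x$, is continuous and piecewise linear (CPWL) with each linear piece having slope in $\{-1,0,1\}$. Since $f_d$ is itself the output of the single node in the final layer, this immediately gives $f_d'(x)\in\{-1,0,1\}$ at every point of differentiability, and a CPWL function with finitely many pieces is differentiable everywhere except at its finitely many kinks, a set of measure zero. Hence the a.e.\ claim follows. Unlike the pure max-sum case of Theorem~\ref{thm:max-sum}, the sign parameters $a_i$ can produce slope $-1$ and compositions can introduce several kinks, so no single closed form such as $\max(w,w'+x)$ is available; the slope invariant is the appropriate weakening.

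For the base case, a first-layer node receives the scalar input and computes $b\vee a(w+x)$; since $a(w+x)$ has slope $a\in\{-1,1\}$ and $b$ is constant, the result is CPWL with slopes in $\{0,a\}\subseteq\{-1,0,1\}$. The inductive step rests on three elementary closure facts I would record first. If $\phi$ is CPWL with slopes in $\{-1,0,1\}$, then (i) multiplication by $a\in\{-1,1\}$ gives $a\phi$, whose slopes are the negatives of those of $\phi$ and hence still lie in $\{-1,0,1\}$, because $\{-1,0,1\}$ is closed under negation; (ii) adding a constant weight $w$ leaves all slopes unchanged; and (iii) the constant bias $b$ contributes only a piece of slope $0$.

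For the inductive step, assume every node in layer $\ell$ outputs a CPWL function of $x$ with slopes in $\{-1,0,1\}$. A node in layer $\ell+1$ forms $b\vee\bigvee_{j} a_j(w_j+g_j(x))$, where each $g_j$ is such a layer-$\ell$ output. By (i)--(iii) each argument of the outer maximum is CPWL with slopes in $\{-1,0,1\}$, and the maximum of finitely many CPWL functions is again a CPWL function with finitely many kinks (finiteness of pieces is preserved because each layer applies only finitely many affine maps and maxima to functions that already have finitely many pieces). The main obstacle, and the step I would treat most carefully, is controlling the derivative of this outer maximum: on the interior of each linear piece of the resulting function the maximum agrees with whichever argument is active there, so its slope equals that argument's slope and thus lies in $\{-1,0,1\}$, while the finitely many kinks form a measure-zero exceptional set. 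I would take care to verify that the active branch is well defined on each open piece and that ties introduce no slope outside $\{-1,0,1\}$, using the fact that CPWL arguments which coincide on an interval necessarily share the same slope there. This establishes the invariant for layer $\ell+1$, completing the induction and hence the proof.
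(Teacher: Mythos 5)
Your proof is correct, and it takes a genuinely different route from the paper's, though both share the same induction-on-depth skeleton. The paper's key lemma (its Lemma~\ref{lemma:reduction_modified}) exhibits an explicit three-parameter class $\mathcal{F}$ of functions of the form $c_1 \vee (c_2 - x) \vee (c_3 + x)$ (and its degenerate subforms) and proves it is closed under the node operation; the induction then shows every signed max-sum network lies in $\mathcal{F}$, and the slope claim falls out because every member of $\mathcal{F}$ has derivative in $\{-1,0,1\}$ a.e. Your invariant --- continuous piecewise linear with slopes in $\{-1,0,1\}$ --- is a strictly weaker closure property, which is precisely what makes it easier to verify: negation, constant shifts, and finite maxima obviously preserve it, and your handling of ties (arguments coinciding on an interval share a slope there) and of finiteness of pieces (the upper envelope of finitely many affine pieces has finitely many breakpoints) covers the only delicate points. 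What the paper's stronger lemma buys is an exact structural characterization, parallel to Theorem~\ref{thm:max-sum}: a signed max-sum network of any depth and width computes a function with at most two kinks, namely a max of one decreasing line, one constant, and one increasing line. Your invariant permits arbitrarily many kinks a priori, so it proves the stated theorem but not this sharper description. Conversely, your argument generalizes more readily --- it works verbatim for any slope set containing $0$ and closed under negation --- whereas the paper's parametric class is specific to this node type. For the theorem as stated, both proofs are complete; your remark that no closed form like $\max(w, w'+x)$ is available is slightly too pessimistic, since the paper's $\mathcal{F}$ shows such a closed form does exist, just with three parameters instead of two.
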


By Lemma \ref{lemma:helper} \emph{signed max-sum}  networks are not universal approximators.  In fact, by the above theorem, these networks have very limited approximation capabilties.  
Note also that the \emph{max-sum} node is a special case of the \emph{signed max-sum} where all the $a_i$'s are set to $1$.  Theorem \ref{thm:max-sum} implies that the derivative of max-sum networks is limited to the set $\{0, 1\}$ which is a proper subset of the possible derivatives of signed-max-sum networks.

\subsection{Max*-Sum Network}
To define the \emph{max*-sum} node, first note the definition of the \emph{max*} function:
\begin{equation}
    \maxstarempty(x,y) = \ln\left(e^x + e^y\right)
\end{equation}
which can be nested as follows 
\begin{align}
    \underset{i}{\maxstarempty} x_i &= \maxstarempty(x_1,x_2,..,x_n)  \nonumber \\
    &=\ln\left(e^{x_1} + e^{x_2} + .. + e^{x_n} \right) \nonumber \\
    &= \ln\left(\sum_{i=1}^n e^x_i\right)
\end{align}
In the case of a \emph{max*-sum} network, the node function $g(.)$ in Fig.~\ref{fig:network} is defined as 
\begin{align}
    g(\bm{x},\bm{w},b) &= \sigma(\maxstarempty(b ,\underset{i}{\maxstarempty} x_i+w_i)) \nonumber \\ 
    &= \sigma\left(\ln\left(e^B + \sum_{i= 0}^n e^{x_i+w_i}\right)\right)
\end{align}
where $\sigma(x)$ is any activation function such that $\sigma'(x)\in [0, 1]$, e.g. a Relu.

The \emph{max*-sum} node processing may be viewed as doing arithmetic in the log domain. Consider the standard sum-product $c_i = \sum_{i}a_ib_i$ where $a_i, b_i >0~\forall i$. If we let $C_i = \ln(c_i), B_i = \ln(b_i)$ and $A_i = \ln(a_i)$ then
\begin{align}
    C_i &= \ln\left(\sum_{i}a_ib_i\right) = \ln\left(\sum_{i}e^{A_i+B_i}\right) \nonumber \\
    &= \underset{i}{\maxstarempty}(A_i + B_i)
\end{align}
This indicates that if all data inputs to the network as well as all the weights are non-negative, then working with \emph{max*-sum} nodes is the equivalent of working with the standard sum-product node, but in the log domain.  This is the reason for introducing a nonlinear activation function for \emph{max*-sum} networks.  Specifically, if a linear mapping with non-negative weights, biases, inputs, and outputs is implemented in the log domain, it would be a network with \emph{max*-sum} node processing and no activation function.

 \begin{theorem}\label{thm:maxstar-sum}
Consider a single-input single-output, $d$ hidden layer neural network with \emph{max*-sum} node processing $f_d(x): \mathbb{R}\rightarrow \mathbb{R}$. 

Then $0 \le f'_d(x) < 1$ 
\end{theorem}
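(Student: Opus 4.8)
The plan is to establish the derivative bound directly and then leave non-universality to Lemma~\ref{lemma:helper}. Concretely, I would prove by induction on the layer index the invariant that the output of \emph{every} node in the network, regarded as a function of the scalar input $x$, has derivative lying in $[0,1)$. Since the scalar output $f_d$ is itself the output of such a node (the single node in the output layer), this invariant gives $0 \le f_d'(x) < 1$ immediately.

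The key local computation is the partial-derivative structure of a single \emph{max*-sum} node. Writing its pre-activation as $h = \ln\!\left(e^B + \sum_{i} e^{a_i + w_i}\right)$, where the $a_i$ denote the node's inputs, direct differentiation gives
\begin{equation}
\frac{\partial h}{\partial a_j} = \frac{e^{a_j + w_j}}{e^B + \sum_i e^{a_i + w_i}} \in (0,1),
\qquad
\sum_j \frac{\partial h}{\partial a_j} = \frac{\sum_j e^{a_j+w_j}}{e^B + \sum_i e^{a_i+w_i}} < 1 .
\end{equation}
These are exactly softmax weights; the strict inequality $\sum_j \partial h/\partial a_j < 1$ is forced by the presence of the bias term $e^B > 0$ in the denominator, and this single fact is what ultimately produces the strict upper bound in the theorem. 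The only other ingredient is the hypothesis $\sigma'(\cdot) \in [0,1]$.

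For the induction, the base case is a first-hidden-layer node whose sole input is $x$ itself: there $\frac{dz}{dx} = \sigma'(h)\,\frac{e^{x+w}}{e^B + e^{x+w}}$ lies in $[0,1)$, since both factors are nonnegative and the fraction is strictly below $1$ thanks to $e^B$. For the inductive step, a node in layer $k+1$ with inputs $a_i$ drawn from layer-$k$ outputs satisfies, by the chain rule,
\begin{equation}
\frac{dz}{dx} = \sigma'(h)\sum_i \frac{\partial h}{\partial a_i}\,\frac{da_i}{dx}.
\end{equation}
Nonnegativity is clear because every factor is nonnegative, using $\frac{da_i}{dx}\ge 0$ from the inductive hypothesis. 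For the upper bound I would combine $\frac{da_i}{dx} < 1$ (inductive hypothesis) with $\sum_i \partial h/\partial a_i < 1$ to obtain $\sum_i \frac{\partial h}{\partial a_i}\frac{da_i}{dx} \le \sum_i \frac{\partial h}{\partial a_i} < 1$, and then multiply by $\sigma'(h)\le 1$, yielding $\frac{dz}{dx} \in [0,1)$ and closing the induction.

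I expect the main obstacle to be bookkeeping rather than a deep idea: keeping the strict inequality airtight at every layer, since a careless bound only delivers $\le 1$. The strictness hinges entirely on the bias term $e^B$ holding the softmax weights below one, so I would be careful never to drop it. A secondary point is differentiability: the log-sum-exp is smooth, so $f_d$ is differentiable wherever $\sigma$ is (everywhere for smooth $\sigma$, almost everywhere for a Relu), and the bound then holds at every such point.
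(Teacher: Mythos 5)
Your proposal is correct and follows essentially the same route as the paper's proof: induction over depth, with the bound driven by the fact that the softmax weights of a \emph{max*-sum} node sum to strictly less than $1$ because of the bias term $e^B$, combined with $\sigma'(\cdot)\in[0,1]$ and the chain rule. The only cosmetic difference is that you phrase the induction as an invariant over every node's output layer by layer, while the paper inducts on the number of hidden layers treating the $d$-th-layer units as $(d-1)$-layer subnetworks; the mathematical content is identical.
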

Again, by Lemma \ref{lemma:helper} \emph{max*-sum} networks are not universal approximators.  

\subsection{Discussion}
In the previous subsections we proved that a neural network with a \emph{max-sum}, \emph{signed max-sum} or \emph{max*-sum} node processing is not a universal approximator.  We noted that \emph{max*-sum} processing can be viewed as sum-product processing of non-negative values in the log domain.  If this were extended to include signed inputs, weights, and biases, one would need to separately track the effects on the sign and magnitude of the quantities.  In fact, this \ac{LNS} arithmetic.  
A neural network with \ac{LNS} arithmetic, and a non-linear activation function, is equivalent to a log-domain sum-product ANN, and thus is also a universal approximator. This suggests that it is unlikely that there exists an ANN using \emph{max}-like family of computations that is a universal approximator and substantially less complex than an \ac{LNS} network.

\section{Conclusion}
We showed that an ANN with morpholgical perceptron (i.e., max-sum) node processing has a very limited approximation capabilities.  Furthermore, direct extensions to this node processing, such as signed-max-sum and $\max^*$-sum, also are not universal approximators.  This suggests that it is unlikely to achieve universal approximation with  a max-sum generalization substantially simpler than an LNS implementation.

\appendices

 \section{Proof of Lemma \ref{lemma:helper}}
 \label{app:helper}
 
\begin{proof} of Lemma \ref{lemma:helper}

Let $N(x)$ be a neural network that uniformly approximates $h(x):[0,1] \to \mathbb{R}$ to $\epsilon$ accuracy for any $\epsilon > 0$. For the sake of contradiction, suppose that there is a set of nonzero measure $I$ such that $h'(x) > b$ for all $x \in I$. Without any loss of generality, we can assume $I = [c, d]$ is a closed interval. It follows from the condition $h'(x) > b$ that
\begin{equation}
    h(d) = h(c) + b(d-c) + \delta
\end{equation}
for some constant $\delta>0$. Since $N(x)$ is at most $\epsilon$ away from $h(a)$ it follows that $N(c)< h(c)+ \epsilon$. Thus, from the condition $N'(x) \leq b$
\begin{equation}
   N(d) <  h(a)+ \epsilon + (d-c)b.
\end{equation} 

Thus:
\begin{equation}
    h(d) -N(d) > \delta - \epsilon
\end{equation}
When $\epsilon< \frac{\delta}{2}$, $h(d) - N(d)>
\frac{\delta}{2} > \epsilon$, which contradicts the fact that $N(x)$ uniformly approximates $h(x)$ to $\epsilon$ accuracy.

Similarly, it can be shown that $h'(x) \geq a$ a.e.
\end{proof}

\begin{proposition}\label{prop:counterex}
A single-input single-output, sum-product Network with Relu activation has its second derivative equal to $0$ a.e.
\begin{proof}
Let $\sigma = \max(0, x)$ be the ReLu function and $N_d(x)$ be the resulting $d$-hidden layer network. Proceed by induction on the number of hidden layers. For one hidden layer network with width $n$, we have
\begin{equation}
    N_1(x) = \sum_{i=1}^n w^{[o]}_i\sigma(w^{[1]}_i x)
\end{equation}
So that 
\begin{align}
    N_1''(x) &= \sum_{i=1}^n w^{[o]}_i (w^{[1]}_i)^2\sigma''(w^{[1]}_i x)\\
    &= 0
\end{align}
where the last equality holds a.e. since $\sigma''(x) = 0$ except when $x = 0$. For the inductive step, note that if the final hidden layer has width $n$, then
\begin{equation}
    N_d(x) = \sum_{i=1}^n w^{[o]}_i\sigma(g_i(x))
\end{equation}
where $w^{[o]}_i$ are the weights at the output layer, and $g_1(x),...,g_n(x)$ can be thought of sub-networks of $N_d(x)$ with $d - 1$ hidden layers, and thus by the inductive hypothesis satisfy $g''_i(x) = 0$ a.e. The next two derivatives of this network are computed as
\begin{align}
    N'_d(x) &= \sum_{i=1}^n w^{[o]}_i \sigma'(g_i(x))g'_i(x)\\
    N''_d(x) &= \sum_{i=1}^n w^{[o]}_i (\sigma''(g_i(x))g'_i(x)^2 + \sigma'(g_i(x))g''_i(x))
\end{align}
The first term in the summation is $0$ except when $g_i(x) = 0$ which happens finitely many times, and thus this term is equal to $0$ a.e. The second term in the summation is $0$ a.e. by the inductive hypothesis. Thus, $N''_d(x) = 0$ a.e., completing our induction. 
\end{proof}
\end{proposition}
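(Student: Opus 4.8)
The plan is to show that any ReLU sum-product network $N_d : \mathbb{R} \to \mathbb{R}$ is a continuous piecewise-linear function with only finitely many breakpoints; the conclusion then follows at once, since on the interior of each linear piece one has $N_d'' = 0$, and the finite breakpoint set has Lebesgue measure zero. I prefer this reframing because it sidesteps any delicate handling of $\sigma''$ at the kink of the ReLU, where it is not classically defined; it reduces the whole statement to a clean structural fact about the function class.

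First I would establish piecewise-linearity by induction on the depth $d$, mirroring the decomposition used in the statement. For the base case, a one-hidden-layer network $N_1(x) = \sum_{i=1}^n w_i^{[o]} \sigma(w_i^{[1]} x)$ is a finite sum of functions each having a single kink at $x = 0$ (or none, if $w_i^{[1]} = 0$), hence continuous piecewise-linear with finitely many breakpoints. For the inductive step I would write $N_d(x) = \sum_{i=1}^n w_i^{[o]} \sigma(g_i(x))$, where each $g_i$ is a $(d-1)$-layer subnetwork that, by the inductive hypothesis, is continuous piecewise-linear with finitely many breakpoints.

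The crux is to argue that composing with $\sigma$ and then summing preserves this structure. Applying $\sigma$ to $g_i$ introduces new kinks only where $g_i$ crosses zero; since $g_i$ is affine on each of its finitely many linear pieces, on any such piece it either vanishes at a single point, never vanishes, or is identically zero (in which case $\sigma(g_i)$ is simply constant there and contributes no new breakpoint). Hence $\sigma \circ g_i$ is again continuous piecewise-linear with finitely many breakpoints, and a finite sum of such functions retains this form, closing the induction.

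The main obstacle is precisely this bookkeeping in the inductive step: one must rule out the possibility that the zero set of $g_i$, and thus the breakpoint set of $N_d$, becomes infinite. The degenerate case in which $g_i$ is identically zero on an entire subinterval is the one to treat with care, but it is benign, since $\sigma(g_i)$ is then constant and so twice-differentiable with vanishing second derivative on that subinterval. Once finiteness of the breakpoint set is secured, $N_d$ is affine on each of the finitely many intervals between consecutive breakpoints, so $N_d''(x) = 0$ on their union, which is all of $\mathbb{R}$ up to a finite, measure-zero set.
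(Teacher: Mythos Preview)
Your proposal is correct. Both arguments use the same inductive decomposition $N_d(x) = \sum_i w_i^{[o]} \sigma(g_i(x))$, but where the paper computes $N_d''$ formally via the chain and product rules and argues term-by-term that $\sigma''(g_i(x))\,g_i'(x)^2$ and $\sigma'(g_i(x))\,g_i''(x)$ vanish a.e., you instead carry through the induction the stronger structural invariant that $N_d$ is continuous piecewise-linear with finitely many breakpoints, from which $N_d'' = 0$ a.e.\ is immediate. Your route is slightly more elementary and sidesteps two soft spots in the paper's version: it never has to interpret $\sigma''$ at the kink, where the classical second derivative does not exist, and it explicitly accounts for the degenerate case in which $g_i$ vanishes on an entire subinterval (the paper asserts that $g_i(x)=0$ happens only finitely many times without addressing this). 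The paper's formal-derivative computation is a bit shorter on the page; your structural argument is cleaner and yields a little more, namely an explicit description of the function class realized by such networks.
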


 \section{Proof of Theorem \ref{thm:max-sum}}
 \label{app:morphological}

To prove Theorem \ref{thm:max-sum}, the following lemma is needed,
\begin{lemma}\label{lemma:reduction}
For any constants $a_1, a_2, b_1, b_2$, $\exists w_0, w_1$ such that  $(a_1+ (b_1\vee  x))\vee (a_2+(b_2 \vee x))= w_0\vee (w_1 + x)$

\begin{proof}
We first note that $(a_1+ (b_1\vee x))\vee (a_2+ (b_2 \vee x)) = a_1 + (b_1 \vee x)\vee(a_2 - a_1+ b_2\vee x) = a_1 + (b_1\vee x)\vee (c+ (b_2\vee x))$. Set
\begin{equation}
    g(x) = a_1 + (b_1 \vee x) \vee (c+ (b_2 \vee x))
\end{equation} 

We hope to show that $g(x) = w_0 \vee( w_1 + x)$ for some $w_0, w_1$ which will depend on $a_1, b_1, b_2, c$. We now have the following cases based on these values. 

\emph{Case 1:} $b_1 > b_2$, $c< 0$. When $x \leq b_2$, we have $f(x) = a_1 + b_1 \vee( b_2 + c) = a_1 + b_1$. When $x \in (b_2, b_1)$, we have $f(x) = a_1 + b_1 \vee(x+c) = a_1 + b_1$. Finally, when $x \geq b_1$, we have $f(x) = a_1 + x \vee (x+c) = a_1 + x$. Thus we can set $w_0 = a_1 + b_1$ and $w_1 = a_1$, completing this case. 

\emph{Case 2:} $b_1 > b_2$, $c> 0$. When $x \leq b_2$, we have $f(x) = a_1 + b_1\vee (b_2 + c)$. When $x \in (b_2, b_1)$, we have $f(x) = a_1 + b_1\vee( x+c)$. Finally, when $x \geq b_1$, we have $f(x) = a_1 + x \vee(x+c) = x+c+a_1$. Thus we can set $w_0 =  a_1 + b_1 \vee(b_2 + c)$ and $w_1 = c+a_1$, completing this case. 

Since $ g(x) = a_1 + (b_1 \vee x)\vee(c+ (b_2\vee x)) = a_1 + c + (-c + (b_1\vee x))\vee(b_2\vee x)$,   the lemma also holds when $b_2 \geq b_1$. This completes all the cases.
\end{proof}
\end{lemma}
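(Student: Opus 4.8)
The plan is to bypass the explicit case enumeration and instead argue structurally via convexity. First I would normalize the expression: since $a_2 + (b_2 \vee x) = a_1 + (c + (b_2 \vee x))$ with $c = a_2 - a_1$, the left-hand side equals $a_1 + m(x)$ where $m(x) = (b_1 \vee x) \vee (c + (b_2 \vee x))$. Because adding the constant $a_1$ to a function of the target form $w_0' \vee (w_1' + x)$ again yields a function of that form (take $w_0 = a_1 + w_0'$ and $w_1 = a_1 + w_1'$), it suffices to show that $m(x) = w_0' \vee (w_1' + x)$ for suitable constants.

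Next I would record the key structural facts about the two terms being joined. Each map $x \mapsto b_i \vee x$ is convex and piecewise linear, equal to the constant $b_i$ on the left ray $x \le b_i$ (slope $0$) and equal to $x$ on the right ray $x \ge b_i$ (slope $1$); the shift by $c$ changes values but not slopes. Their pointwise maximum $m(x)$ is therefore convex, being a maximum of convex functions, and piecewise linear, and at every point of differentiability $m'(x)$ equals the slope of whichever term attains the maximum, so $m'(x) \in \{0,1\}$.

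The heart of the argument is that these two constraints force a single breakpoint. Convexity makes $m'$ nondecreasing. As $x \to -\infty$ both terms lie in their flat regimes, so $m'(x) = 0$; as $x \to +\infty$ both terms have slope $1$, so $m'(x) = 1$. A nondecreasing function taking only the values $0$ and $1$, equal to $0$ near $-\infty$ and to $1$ near $+\infty$, must jump from $0$ to $1$ at exactly one threshold $t$. Hence $m$ is constant, say $w_0'$, on $x \le t$ and equals $w_1' + x$ on $x \ge t$; continuity at $t$ forces $w_0' = w_1' + t$, which is precisely $m(x) = w_0' \vee (w_1' + x)$ with breakpoint $t = w_0' - w_1'$. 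Undoing the normalization gives the claim.

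The step I expect to be the only real content is verifying that $m'$ never takes a value outside $\{0,1\}$ and that its limiting slopes are indeed $0$ and $1$; once that is in hand, convexity does all the remaining work, and no split on the sign of $c$ or the ordering of $b_1,b_2$ is needed. An equivalent, more computational route—the one implicit in the statement—is to fix the ordering $b_1 \ge b_2$ using the identity $m(x) = c + (-c + (b_1 \vee x)) \vee (b_2 \vee x)$ and then split on $\mathrm{sign}(c)$, reading off $w_0',w_1'$ from the three linear pieces in each branch; I would fall back on this if an explicit witness for $w_0,w_1$ is wanted.
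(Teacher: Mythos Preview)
Your argument is correct and takes a genuinely different route from the paper. The paper normalizes exactly as you do (setting $c=a_2-a_1$), but then proceeds by explicit case analysis: it fixes $b_1>b_2$, splits on the sign of $c$, evaluates the expression on the three intervals $x\le b_2$, $b_2<x<b_1$, $x\ge b_1$, and reads off $w_0,w_1$ in each branch; the remaining orderings of $b_1,b_2$ are handled by the symmetry identity you mention at the end. Your approach replaces this bookkeeping with a one-line structural observation: $m$ is a pointwise maximum of convex piecewise-linear functions whose slopes lie in $\{0,1\}$, so $m$ is convex with $m'\in\{0,1\}$, and since the limiting slopes at $\pm\infty$ are $0$ and $1$, monotonicity of $m'$ forces a single breakpoint.

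The tradeoff is that the paper's case analysis hands you explicit constants, while your proof is existential as written; but in fact your argument yields the witnesses just as easily, since $w_0'=\lim_{x\to-\infty}m(x)=b_1\vee(c+b_2)$ and $w_1'=\lim_{x\to+\infty}(m(x)-x)=0\vee c$, giving $w_0=(a_1+b_1)\vee(a_2+b_2)$ and $w_1=a_1\vee a_2$. What your approach buys beyond elegance is that it generalizes with no extra work to an arbitrary finite maximum $\bigvee_{i}(a_i+(b_i\vee x))$, which would let the induction in the subsequent theorem dispatch all $n$ hidden units at once rather than by repeated pairwise reduction.
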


\begin{proof} of Theorem \ref{thm:max-sum}

We can prove this by induction on the number of hidden layers. Consider a one hidden layer network with $n$ computation units. The resulting function is of the form
\begin{equation}
   N_1(x) = b^{[o]} \vee \left(\bigvee_{i=1}^n\left(w^{[o]}_{i} + b^{[1]}_{i}\vee (w^{[1]}_{i}+x)\right)\right)
\end{equation}
for some real valued constants $b^{[o]}, w^{[o]}_{i}, w^{[1]}_{i}, b^{[1]}_{i}$ for $i \in [1, n]$. First note that $b\vee (w + x) = w + (b - w)\vee x$. Then 
\begin{align}
   N_1(x) = b^{[o]} \vee \Bigg( \Bigg. \bigvee_{i=1}^n\left( \right. &w^{[o]}_{i} + w^{[1]}_{i} +\nonumber \\
 &(b^{[1]}_{i} - w^{[1]}_{i})\vee x \left. \Bigg) \Bigg. \right) \label{eq:N_1}
\end{align} 
We can now apply Lemma \ref{lemma:reduction} to equation \eqref{eq:N_1} repeatedly to complete our base case. 

For the inductive step, suppose that the theorem is true for $d-1$ hidden layers. For a network with $d$ layers, suppose that the $d$-th hidden layer had $n$ computation units. Denote the output of each computation unit in the $d$-th hidden layer as $g_i(x)$ for $i \in [1, n]$. Now $g_i(x)$ can be thought of as a neural network with $d - 1$ hidden layers. It is clear from these definitions that 
\begin{equation}\label{eq:N_d}
      N_d(x) = b^{[o]} \vee \left(\bigvee_{i=1}^n\left(w^{[o]}_{i} + g_i(x)\right)\right)
\end{equation}
for some real valued constants $b^{[o]}, w^{[o]}_{i}$ for $i \in [1, n]$. By the inductive hypothesis, each $g_i(x) = w_i\vee(w_i' + x)$ for some values $w_i$, $w_i'$. As with the base case, Lemma \ref{lemma:reduction} can be repeatedly applied to equation \eqref{eq:N_d} to complete the inductive step. 

\end{proof}

 \section{Proof of Theorem \ref{thm:signed-max-sum}}
 \label{app:modified-mp}

To prove Theorem \ref{thm:signed-max-sum}, we first need the following lemma, analogous to lemma \ref{lemma:reduction}.

\begin{lemma}\label{lemma:reduction_modified}
Consider the class of functions 
\begin{multline}
    \mathcal{F}:= \{c_1, c_2 - x, c_3 + x, c_1 \vee (c_2 - x),  c_1 \vee (c_3 + x),\\
    (c_2 - x) \vee (c_3 + x),c_1 \vee (c_2 - x) \vee (c_3 + x)|c_1,c_2,c_3 \in \mathbb{R} \}
\end{multline}
Then $\forall f, g \in \mathcal{F}$, we have $f\vee g \in \mathcal{F}$. 

\begin{proof}
This is trivial to verify except when $f$ or $g$ takes on the form $  (c_2 - x) \vee (c_3 + x),c_1 \vee (c_2 - x) \vee (c_3 + x$. To address this case, let $\sigma_{a_1,a_2,a_3}(x) =(a_1 - x)\vee  a_2\vee (a_3 + x)$. Then it is sufficient to verify the following properties
\begin{enumerate}
    \item $\sigma_{a_1,a_2,a_3}(x)\vee(b - x) = \sigma_{c_1,c_2,c_3}(x)$ for some $c_1,c_2, c_3 \in \mathbb{R}$. 
    
    \item $\sigma_{a_1,a_2,a_3}(x)\vee( b + x) = \sigma_{c_1,c_2,c_3}(x)$ for some $c_1,c_2, c_3 \in \mathbb{R}$. 
    
    \item  $\sigma_{a_1,a_2,a_3}(x) \vee b = \sigma_{c_1,c_2,c_3}(x)$ for some $c_1,c_2, c_3 \in \mathbb{R}$. 
    
    \item $\sigma_{a_1, a_2, a_3}(x)\vee \sigma_{b_1, b_2, b_3}(x) = \sigma_{c_1,c_2,c_3}(x)$ for some constants $c_1,c_2,$ and $c_3$. 

\end{enumerate}

Property (1) can be proved as follows,
\begin{align}
 &\sigma_{a_1,a_2,a_3}(x)\vee(b - x)\\ \nonumber
    &= ((a_1 - x)\vee( b - x))\vee a_2\vee (a_3 + x)\\ \nonumber
    &= ((a_1\vee b) - x)\vee a_2 \vee(a_3 + x)\\ \nonumber
    &= \sigma_{a_1\vee b , a_2, a_3}(x)
\end{align}
Properties (2) and (3) can be treated similarly. Property (4) follows from basic properties of $a \vee b$,
\begin{align}
    &\sigma_{a_1, a_2, a_3}(x)\vee \sigma_{b_1, b_2, b_3}(x) \\ \nonumber
    &= ((a_1 - x)\vee(b_1 - x))\vee (a_2\vee b_2) \\ \nonumber
    &\qquad \vee ((a_3 +x)\vee(b_3 + x))
\end{align}

Thus, letting $c_1 =a_1 \vee b_1$, $c_2 = a_2 \vee b_2$, $c_3 = a_3 \vee b_3$ gives us the desired result. 
\end{proof}
\end{lemma}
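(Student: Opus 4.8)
The plan is to identify $\mathcal{F}$ with a clean geometric class and then observe that this class is manifestly closed under $\vee$. Specifically, I would argue that $\mathcal{F}$ is exactly the set of continuous piecewise-linear convex functions $\mathbb{R} \to \mathbb{R}$ whose one-sided slopes all lie in $\{-1, 0, 1\}$. One inclusion is immediate: every generator listed in the definition of $\mathcal{F}$ is a pointwise maximum of affine functions with slopes in $\{-1, 0, 1\}$ (a constant has slope $0$, $c_2 - x$ has slope $-1$, and $c_3 + x$ has slope $+1$), and any maximum of affine functions is convex with one-sided slopes drawn from the slopes of its pieces. Hence each element of $\mathcal{F}$ lies in the claimed class.

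Granting this characterization, closure under $\vee$ follows at once. For $f, g \in \mathcal{F}$, the function $f \vee g$ is a maximum of convex functions, hence convex, and at every point its one-sided slope equals the slope of whichever of $f, g$ is the larger there, so it again lies in $\{-1, 0, 1\}$. Thus $f \vee g$ belongs to the characterized class. The whole lemma therefore reduces to the converse half of the characterization: every convex piecewise-linear function with slopes in $\{-1, 0, 1\}$ has the explicit form $c_1 \vee (c_2 - x) \vee (c_3 + x)$ for suitable $c_1, c_2, c_3$, where dropping a piece recovers the degenerate members of $\mathcal{F}$.

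This converse is where I expect the only genuine work, and it is short. Convexity forces the slope to be a nondecreasing function of $x$; since it takes values only in $\{-1, 0, 1\}$, the slope profile must be $-1$ on an initial (possibly empty or unbounded) interval, then $0$ on a middle interval, then $+1$ on a final interval. On these three regions the function equals $c_2 - x$, a constant $c_1$, and $c_3 + x$ respectively, and continuity pins the constants so that $f(x) = c_1 \vee (c_2 - x) \vee (c_3 + x)$; whenever one of the slope-regions is absent, the corresponding piece drops out, yielding precisely the remaining members of the list defining $\mathcal{F}$. The main bookkeeping obstacle is handling these degenerate cases uniformly, which is cleanly done by permitting a coefficient to equal $-\infty$, so that the corresponding piece never attains the maximum.

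As a purely algebraic alternative that sidesteps the characterization, I would group the affine pieces of $f$ and $g$ by slope and merge within each group using the identities $(a - x) \vee (b - x) = (a \vee b) - x$, $(a + x) \vee (b + x) = (a \vee b) + x$, and $a \vee b = \max(a, b)$, together with associativity and commutativity of $\vee$. Since $f \vee g$ is a maximum of at most six affine pieces with slopes in $\{-1, 0, 1\}$, merging collapses it to at most one piece per slope, i.e.\ to the form $c_1 \vee (c_2 - x) \vee (c_3 + x) \in \mathcal{F}$. This is essentially the casework that the four auxiliary properties in the paper carry out, reorganized so that no case distinctions on the signs of the constants are needed.
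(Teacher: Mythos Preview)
Your proof is correct. Your second, algebraic alternative is exactly the paper's approach: the paper introduces $\sigma_{a_1,a_2,a_3}(x)=(a_1-x)\vee a_2\vee(a_3+x)$ and verifies closure via the identities $(a-x)\vee(b-x)=(a\vee b)-x$ and its analogues for the other two slopes, which is precisely your ``group the affine pieces by slope and merge'' argument. Your primary geometric route, however, is genuinely different. By characterizing $\mathcal{F}$ as the set of convex piecewise-linear functions on $\mathbb{R}$ with all one-sided slopes in $\{-1,0,1\}$, you replace the paper's four case-verifications with a single structural observation: the maximum of two such functions is again convex, piecewise linear, and inherits its slopes from its constituents. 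The only real work then lies in the converse half of the characterization, which you dispatch using monotonicity of the subdifferential of a convex function. This approach is cleaner and explains \emph{why} the class is closed, and it would generalize immediately to any finite slope set; the paper's algebraic verification, on the other hand, produces the output constants explicitly (e.g.\ $c_i=a_i\vee b_i$) without invoking convexity at all. One minor quibble: your sentence ``its one-sided slope equals the slope of whichever of $f,g$ is the larger there'' is not literally correct at tie points, where the one-sided derivative of $f\vee g$ is $\max(f'_\pm,g'_\pm)$; but this still lies in $\{-1,0,1\}$, so the conclusion stands.
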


With this lemma we can now prove Theorem \ref{thm:signed-max-sum}.

\begin{proof} of Theorem \ref{thm:signed-max-sum}
Since all the functions in $\mathcal{F}$ as given in Lemma \ref{lemma:reduction_modified} have their derivatives in the set $\{-1, 0, 1\}$ a.e., it is sufficient to use induction on the number of hidden layers to prove that the neural networks given by the theorem statement belong in $\mathcal{F}$. Consider a single input single output, one hidden layer network with $n$ computation units. The resulting function is of the form
\begin{align}\label{eq:N_1_modified}
   N_1(x) = b^{[o]}\vee \Bigg( \Bigg.&\bigvee_{i=1}^n a^{[o]}_{i}\left( \right. w^{[o]}_{i} +  \nonumber \\ 
   &b^{[1]}_{i}\vee a^{[1]}_{i}(w^{[1]}_{i}+ x) \left.\right) \Bigg. \Bigg)
\end{align}

for some real valued constants $b^{[o]}$, $b^{[1]}_{i}$, $a^{[o]}_i$, $a^{[1]}_i$, $w^{[o]}_{i}$, $w^{[1]}_{i}$ for $i \in [1, n]$. Since $a^{[1]}_{i}(w^{[1]}_{i}+ x)$ and $w^{[o]}_{i} \in \mathcal{F}$ as in Lemma \ref{lemma:reduction_modified},  note that $w^{[o]}_{i} + b^{[1]}_{i}\vee a^{[1]}_{i}(w^{[1]}_{i}+ x)\in \mathcal{F}$, from which it is clear that $a^{[o]}_{i}(w^{[o]}_{i} + b^{[1]}_{i}\vee a^{[1]}_{i}(w^{[1]}_{i}+ x))\in \mathcal{F}$ as well. Thus, $N_1(x) \in \mathcal{F}$, completing the base case.

For the inductive step, suppose that the theorem is true for $d-1$ hidden layers. For a network with $d$ layers, suppose the $d$-th hidden layer had $n$ computation units. Denote the output of each computation unit in the $d$-th hidden layer as $g_i(x)$ for $i \in [1, n]$. Now $g_i(x)$ is effectively a neural network with $d - 1$ hidden layers. It is clear from these definitions that 
\begin{equation}\label{eq:N_d_modified}
      N_d(x) = b^{[o]} \vee \left(\bigvee_{i=1}^n a^{[o]}_{i}\left(w^{[o]}_{i} + g_i(x)\right)\right)
\end{equation}
for some real valued constants $b^{[o]}, a^{[o]}_i, w^{[o]}_{i}$ for $i \in [1, n]$. From the inductive hypothesis, $g_i(x) \in \mathcal{F}$, from which a repeated application of Lemma \ref{lemma:reduction_modified} shows that $N_d(x) \in \mathcal{F}$ as well, completing our induction. 
\end{proof}

 \section{Proof of Theorem \ref{thm:maxstar-sum}}
 \label{app:maxstar-sum}
\begin{proof} of Theorem \ref{thm:maxstar-sum}
It is easy to see that $N'_d(x)$ is undefined when $\sigma'(x)$ is undefined. Thus there are only finitely many points for which $N'_d(x)$ is undefined. For the values of $x$ for which $N'_d(x)$ is defined, we shall show that $N'_d(x) \leq 1$ using induction on the number of hidden layers. Consider a single-input single-output, one hidden layer network with $n$ computation units. The resulting function is of the form
\begin{align}
   N_1(x) = \ln \Bigg[ \Bigg. &e^{b^{[o]}} + \nonumber \\
   &\sum_{i=1}^n e^{w^{[o]}_i + \sigma\left(\ln\left(e^{b^{[1]}_i}+e^{w^{[1]}_i+x}\right)\right)} \Bigg. \Bigg] \label{eq:N_1_thm_3}
\end{align} 

for some real valued constants $b^{[o]}, w^{[o]}_i, b^{[1]}_i, w^{[1]}_i$ for $i \in [1, n]$. From equation \eqref{eq:N_1_thm_3} it follows that
\begin{equation}
    N'_1(x) = \frac{\alpha(x)}{\beta(x)}
\end{equation}

where

\begin{align}
&\alpha(x)= \nonumber \\ 
&\sum_{i=1}^n \left[ e^{w^{[o]}_i + \sigma\left(M(x)\right)} 
\sigma'(M(x))\frac{e^{w^{[1]}_i+x}}{e^{b^{[1]}_i}+e^{w^{[1]}_i+x}} \right]
\end{align}

\begin{equation}
    \beta(x) = e^{b^{[o]}} + \sum_{i=1}^n e^{w^{[o]}_i + \sigma\left(M(x)\right)}
\end{equation}

\begin{equation}
  M(x) =\ln\left(e^{b^{[1]}_i}+e^{w^{[1]}_i+x}\right)
\end{equation}

Since $\sigma'(x) \in [0, 1]$, it is immediately clear that $N'_1(x) \geq 0$. $N'_1(x)<1$ follows from the following inequalities. 
\begin{equation}
 e^{w^{[1]}_i+x} < e^{b^{[1]}_i}+e^{w^{[1]}_i+x}
\end{equation}
and
\begin{multline}
    \sum_{i=1}^n \left[e^{w^{[o]}_i + \sigma\left(M(x)\right)}\sigma'(M(x))\right] \\
    < e^{b^{[o]}} + \sum_{i=1}^n e^{w^{[o]}_i + \sigma\left(M(x)\right)}
\end{multline}

completing the base case. For the inductive step, suppose the theorem is true for $d-1$ hidden layers. Consider a network with $d$ hidden layers where the $d$th hidden layer has $n$ computation units. Denote the output of each computation unit in the $d$-th hidden layer as $g_i(x)$ for $i \in [1, n]$. Now $g_i(x)$ is effectively a neural network with $d - 1$ hidden layers. Since there is no activation at the output layer, $N_d(x)$ has the following form
\begin{equation}\label{eq:g_d}
    N_d(x) = \ln\left[e^{b^{[o]}} + \sum_{i=1}^n e^{w^{[o]}_i+\sigma(g_i(x))}\right]
\end{equation}
for some real valued constants $b^{[o]}, w^{[o]}_i$ for $i \in [1, n]$. From equation \eqref{eq:g_d} it follows that
\begin{equation}
    N'_d(x) = \frac{\sum_{i=1}^n e^{w^{[o]}_i+\sigma(g_i(x))}\sigma'(g_i(x))g'_i(x)}{e^{b^{[o]}} + \sum_{i=1}^n e^{w^{[o]}_i+\sigma(g_i(x))}}
\end{equation}

From the inductive hypothesis, $g'_i(x) \geq 0$. Therefore $N'_d(x) \geq 0$. $N'_d(x)<1$ follows from the following inequality which is trivially true since $g'_i(x) < 1$,
\begin{equation}
 e^{w^{[o]}_i+\sigma(g_i(x))}\sigma'(g_i(x))g'_i(x)< e^{w^{[o]}_i+\sigma(g_i(x))}
\end{equation}
This completes the proof.
\end{proof}

\bibliographystyle{IEEEtran}
\bibliography{references} 

% Generated by IEEEtran.bst, version: 1.14 (2015/08/26)
\begin{thebibliography}{10}
\providecommand{\url}[1]{#1}
\csname url@samestyle\endcsname
\providecommand{\newblock}{\relax}
\providecommand{\bibinfo}[2]{#2}
\providecommand{\BIBentrySTDinterwordspacing}{\spaceskip=0pt\relax}
\providecommand{\BIBentryALTinterwordstretchfactor}{4}
\providecommand{\BIBentryALTinterwordspacing}{\spaceskip=\fontdimen2\font plus
\BIBentryALTinterwordstretchfactor\fontdimen3\font minus
  \fontdimen4\font\relax}
\providecommand{\BIBforeignlanguage}[2]{{%
\expandafter\ifx\csname l@#1\endcsname\relax
\typeout{** WARNING: IEEEtran.bst: No hyphenation pattern has been}%
\typeout{** loaded for the language `#1'. Using the pattern for}%
\typeout{** the default language instead.}%
\else
\language=\csname l@#1\endcsname
\fi
#2}}
\providecommand{\BIBdecl}{\relax}
\BIBdecl

\bibitem{krizhevsky-2012-alexnet}
A.~Krizhevsky, I.~Sutskever, and G.~E. Hinton, ``Imagenet classification with
  deep convolutional neural networks,'' in \emph{Advances in Neural Information
  Processing Systems}, F.~Pereira, C.~J.~C. Burges, L.~Bottou, and K.~Q.
  Weinberger, Eds., vol.~25.\hskip 1em plus 0.5em minus 0.4em\relax Curran
  Associates, Inc., 2012.

\bibitem{devlin-etal-2019-bert}
J.~Devlin, M.-W. Chang, K.~Lee, and K.~Toutanova, ``{BERT}: Pre-training of
  deep bidirectional transformers for language understanding,'' in
  \emph{Proceedings of the 2019 Conference of the North {A}merican Chapter of
  the Association for Computational Linguistics: Human Language Technologies,
  Volume 1 (Long and Short Papers)}.\hskip 1em plus 0.5em minus 0.4em\relax
  Minneapolis, Minnesota: Association for Computational Linguistics, Jun. 2019,
  pp. 4171--4186.

\bibitem{alphafold-2020}
J.~Jumper, R.~Evans, A.~Pritzel, T.~Green, M.~Figurnov, and O.~Ronneberger,
  ``Highly accurate protein structure prediction with alphafold,''
  \emph{Nature}, vol. 596, no. 7873, pp. 583--589, 2021.

\bibitem{nvidia-2016-autonomous}
M.~Bojarski, D.~D. Testa, D.~Dworakowski, B.~Firner, B.~Flepp, P.~Goyal, L.~D.
  Jackel, M.~Monfort, U.~Muller, J.~Zhang, X.~Zhang, J.~Zhao, and K.~Zieba,
  ``End to end learning for self-driving cars,'' \emph{CoRR}, vol.
  abs/1604.07316, 2016.

\bibitem{han-2016-pruning}
S.~Han, H.~Mao, and W.~J. Dally, ``Deep compression: Compressing deep neural
  network with pruning, trained quantization and huffman coding,'' in \emph{4th
  International Conference on Learning Representations, {ICLR} 2016, San Juan,
  Puerto Rico, May 2-4, 2016, Conference Track Proceedings}, Y.~Bengio and
  Y.~LeCun, Eds., 2016.

\bibitem{kundu-2020-sparsity}
S.~Kundu, M.~Nazemi, M.~Pedram, K.~M. Chugg, and P.~A. Beerel, ``Pre-defined
  sparsity for low-complexity convolutional neural networks,'' \emph{IEEE
  Transactions on Computers}, vol.~69, no.~7, pp. 1045--1058, 2020.

\bibitem{raghuraman-2018-quantization}
R.~Krishnamoorthi, ``Quantizing deep convolutional networks for efficient
  inference: {A} whitepaper,'' \emph{CoRR}, vol. abs/1806.08342, 2018.

\bibitem{roy-2019-snn}
K.~Roy and A.~J. .~P. Panda, ``Towards spike-based machine intelligence with
  neuromorphic computing,'' \emph{Nature}, vol. 575, no. 7784, pp. 607--617,
  2019.

\bibitem{cao-2015-snn}
Y.~Cao, Y.~Chen, and D.~Khosla, ``Spiking deep convolutional neural networks
  for energy-efficient object recognition,'' \emph{International Journal of
  Computer Vision}, vol. 113, no.~1, pp. 54--66, 2015.

\bibitem{charisopoulos2017morphological}
V.~Charisopoulos and P.~Maragos, ``Morphological perceptrons: geometry and
  training algorithms,'' in \emph{International Symposium on Mathematical
  Morphology and Its Applications to Signal and Image Processing}.\hskip 1em
  plus 0.5em minus 0.4em\relax Springer, 2017, pp. 3--15.

\bibitem{sussner1998signedmaxsum}
P.~Sussner, ``Morphological perceptron learning,'' in \emph{Proceedings of the
  1998 IEEE International Symposium on Intelligent Control (ISIC) held jointly
  with IEEE International Symposium on Computational Intelligence in Robotics
  and Automation (CIRA) Intell}.\hskip 1em plus 0.5em minus 0.4em\relax IEEE,
  1998, pp. 477--482.

\bibitem{ritter1996introduction}
G.~X. Ritter and P.~Sussner, ``An introduction to morphological neural
  networks,'' in \emph{Proceedings of 13th International Conference on Pattern
  Recognition}, vol.~4.\hskip 1em plus 0.5em minus 0.4em\relax IEEE, 1996, pp.
  709--717.

\bibitem{miyashita-2016-lns}
D.~Miyashita, E.~H. Lee, and B.~Murmann, ``Convolutional neural networks using
  logarithmic data representation,'' \emph{CoRR}, vol. abs/1603.01025, 2016.

\bibitem{sanyal-2020-lns}
A.~Sanyal, P.~A. Beerel, and K.~M. Chugg, ``Neural network training with
  approximate logarithmic computations,'' in \emph{ICASSP 2020 - 2020 IEEE
  International Conference on Acoustics, Speech and Signal Processing
  (ICASSP)}, 2020, pp. 3122--3126.

\bibitem{RoViHo95}
P.~Robertson, E.~Villebrum, and P.~Hoeher, ``A comparison of optimal and
  suboptimal {MAP} decoding algorithms operating in the log domain,'' in
  \emph{Proceeding, IEEE International Conference on Communications}, Seattle,
  WA, 1995, pp. 1009--1013.

\bibitem{hornik1990universal}
K.~Hornik, M.~Stinchcombe, and H.~White, ``Universal approximation of an
  unknown mapping and its derivatives using multilayer feedforward networks,''
  \emph{Neural networks}, vol.~3, no.~5, pp. 551--560, 1990.

\bibitem{hornik1991approximation}
K.~Hornik, ``Approximation capabilities of multilayer feedforward networks,''
  \emph{Neural networks}, vol.~4, no.~2, pp. 251--257, 1991.

\bibitem{yarotsky2017error}
D.~Yarotsky, ``Error bounds for approximations with deep relu networks,''
  \emph{Neural Networks}, vol.~94, pp. 103--114, 2017.

\end{thebibliography}

\end{document}